\newtheorem{proposition}{Proposition}
\title{\LARGE \bf
TT-SDF2PC: Registration of Point Cloud and Compressed SDF
Directly in the Memory-Efficient Tensor Train Domain
}
\author{Alexey I. Boyko$^{*,1,2}$, Anastasiia Kornilova$^{*,2}$, Rahim Tariverdizadeh$^{2}$, \\
Mirfarid Musavian$^{2}$,  Larisa Markeeva$^{2}$, Ivan Oseledets$^{2}$, Gonzalo Ferrer$^{2}$
\thanks{$^{*}$ Indicates equal contributions from both authors.}
\thanks{ $^{1}$ AIRI Institute,
$^{2}$ Skolkovo Institute of Science and Technology
(Skoltech), Center for AI Technology (CAIT). {\tt\small @skoltech.ru} }%
}
\begin{document}

\maketitle
\thispagestyle{empty}
\pagestyle{empty}

\begin{abstract}
This paper addresses the following research question: ``can one compress a detailed  3D representation and use it directly for point cloud registration?''. Map compression of the scene can be achieved by the tensor train (TT) decomposition of the signed distance function (SDF) representation. It regulates the amount of data reduced by the so-called TT-ranks.

Using this representation we have proposed an algorithm, the TT-SDF2PC, that is capable of directly registering a PC to the compressed SDF by making use of efficient calculations of its derivatives in the TT domain, saving computations and memory.
We compare TT-SDF2PC with SOTA local and global registration methods in a synthetic dataset and a real dataset and show on par performance while requiring significantly less resources.
\end{abstract}

{
  \small	
  \textbf{\textit{Keywords---}} Point Cloud Alignment; 3D Registration; SDF
}

\section{Introduction}

Map compression allows to significantly reduce the {\em memory requirements} to store 3D scenes or objects, which is mandatory for large-scale mapping and localization \cite{cadena2016past} and the deployment of robots with low-power computational devices operating within limited resources.
The reasons to reduce the amount of data to represent a scene are numerous and evident, and even more to directly work with such a reduced representation. Consequently, this paper addresses the following research question:
{``Can one compress a map {\em and} operate directly on this compressed domain, without uncompressing?''}. The particular task that we analyze in this paper is point cloud registration, which is ubiquitous in perception in robotics: motion estimation, localization, mapping, 3D reconstruction, etc.

In order to achieve this goal, first we need to show that compression achieves a significant reduction in the required memory to store. Point clouds (PC), 3D grids, voxels, meshes, signed distance functions (SDF) or even neural function approximators are some representations for these 3D environments, each of them having strengths and weaknesses.
Signed distance function and its truncated version (TSDF) are a popular choice for implicit shape representations, because of its high expressiveness. Therefore, they find active applications in 3D reconstruction problems, both in classic~\cite{Curless1996, KinectFusion2011, Bylow2013} and deep learning approaches~\cite{dist, deepsdf2019, sdfdiff, Niemeyer}.

\begin{figure}[t]
\centering
\includegraphics[width=.48\textwidth]{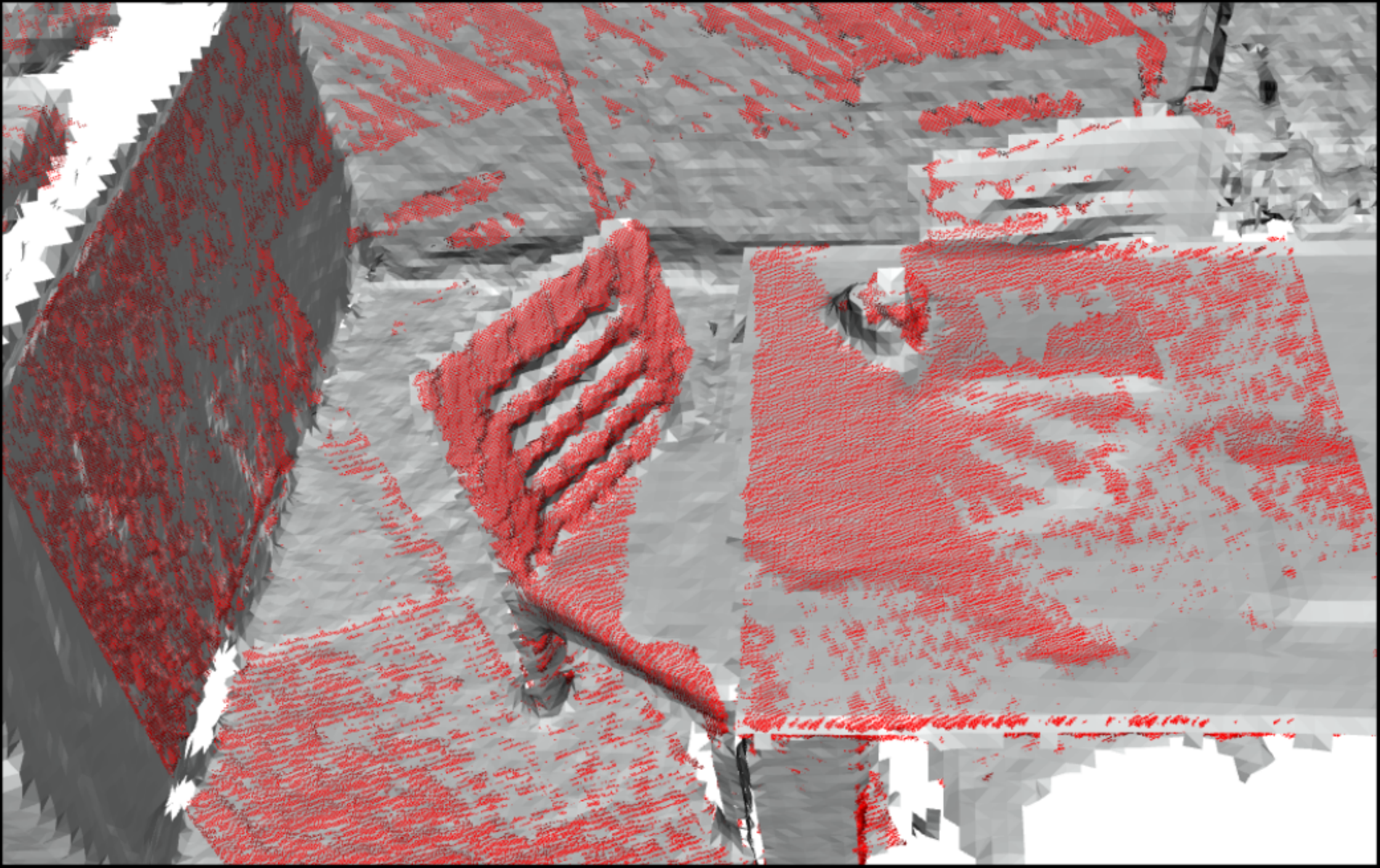}
\caption{Result of the TT-SDF2PC registration algorithm that operates directly on compressed SDF map in the \emph{7-scenes-redkitchen} environment from 3DMatch~\cite{shotton20137scenes}. The compressed TT-SDF model (gray) is aligned with a depth map of the scene (red). }
\label{fig_teaser}
\end{figure}

Despite the numerous advantages of SDF for representing 3D objects and scenes, the storage requirements in the naive setting of voxel arrays makes them barely usable in scenarios when a high-resolution 3D shape is needed. Deep learning approaches such as DeepSDF \cite{deepsdf2019}, on the other hand, in principle allow for compact latent representation of SDFs, but they are limited to object-centric datasets  which make them unusable in the robotics framework.
In our previous work, TT-TSDF \cite{tttsdf}, we researched on map compression and came with a tensor train (TT) based solution that only requires around 3 MB to store a high-resolution $512^3$ voxelized TSDF model.

The second objective is related to the success of the registration task. The compress rate is irrelevant if the degree of accuracy of the registration is low. 
We propose an algorithm for point cloud registration with respect to a SDF scene that operates directly on the tensor-compressed implicit representation domain.
This poses the main challenge of the present work, where we will show the validity of our method and compare with other state of the art works on point cloud registration, both local and global methods.
The benchmarks chosen are ModelNet \cite{modelnet}, popular among the graphics community, and 3Dmatch \cite{zeng20173dmatch}, a compendium of datasets, real scenes observed by RGBD camera.

Contributions of this paper are the following:
\begin{enumerate}
    \item Direct registration of a PC observation and the tensor-compressed high-resolution representation of the SDF scene.
    \item Evaluation of our algorithm with SOTA PC alignment methods, showing the validity on graphic models and indoors scenes showing almost the same order of accuracy in registration w.r.t local registration methods, providing the advantage of a detailed surface representation with the same order of memory consumption as sparse point clouds on huge scenes.
\end{enumerate}

\section{Related Work}

On its simplest form, the problem of point cloud registration has an elegant closed-form solution \cite{arun1987,horn1987}, which requires the strong assumption of known point associations.
Iterative closest point (ICP) \cite{besl1992} overcomes the associations limitation by following a greedy approach for finding pairs of points, in what we can consider a {\em local} registration method. This popular approach spanned multiple research directions \cite{rusik2001ICP_efficient, zhang1994} including point-to-point, point-to-line \cite{censi2008}, point-to-plane \cite{chen1991}, and other variants emulating planes on different manners \cite{segal2009,serafin2015,ferrer2019}.

An interesting variant of {\em local} registration is by a volumetric representation, the TSDF (also sometimes called projectional SDF) stored in a uniform 3D grid.
Related works \cite{sdftracker, Bylow2013,sdf2sdf, sdf2sdf2} use finite-difference derivatives coupled with optimization methods for point-to-implicit and implicit-to-implicit alignment.
Our proposed method TT-SDF2PC, is a variant of {\em local} point-to-implicit registration method, with a significant difference: the domain for registration is directly the compressed representation, which saves the need to uncompress the map, and alleviates the memory requirements.

How to generate these volumetric representations from point clouds is an important part of this work.
In general, fusion algorithms \cite{Curless1996} allow to generate the required volumetric representation. Variants such as KinectFusion \cite{KinectFusion2011},
VDBFusion \cite{vizzo2022sensors} from OpenVDB \cite{museth2013vdb,museth2013openvdb} allow this operation.
SDF has also been shown to be useful for other robotics problems \cite{oleynikova2017voxblox}.

{\em Global} registration methods provide a solution without need of an initial condition, therefore its convergence basin is superior to those from the {\em local} methods. Robust associations \cite{gelfand2005robust} find a global solution with relative good performance. This problem is inevitably entangled with the quality of the point features to find good correspondences. To this end,
early works on 3D local features were investigated, such as the FPFH \cite{rusu2009fast} or SHOT \cite{salti2014shot}. 
This first generation of features are heuristic methods considered to be {\em local} and often are not discriminative enough.

Modern machine learning, in particular neural architectures for point clouds, 
yield outstanding results in some benchmarks. Examples include LocNet\cite{yin2018locnet}, deep closest point \cite{wang2019DCP} making use of transformers,
learning multi-view \cite{gojcic2020minkowsky} or
multi-scale architecture and unsupervised transfer learning MSSVConv \cite{horache2021mssvconv}.
In addition to point features, researchers have investigated more human-understandable representations to solve the registration problem, such as
semantics \cite{zaganidis2018integrating} or segments \cite{dube2020segmap}.

Direct improvements over the optimization part also bring improvements. Fast global registration (FGR) \cite{zhou2018FGR} use graduated non-convexity with robust estimators. TEASER \cite{yang2020teaser} filters incorrect associations by a sequence of hypothesis test.

On compression of 3D scenes, meshes are a highly compact representation, but its calculation is complex, not exempt of problems. Point clouds, can be decimated or reduced to voxels, which brings a reduction in the number of points, but its expressiveness of details is questionable.
Grid-based representations, show an overall order of magnitude for compression around 5-10 times, which roughly corresponds to their inherent sparsity.
In our previous work TT-SDF \cite{tttsdf}, the authors propose a compression algorithm of TSDF representation  by several orders of magnitude by maintaining a reasonably model consistency, which to the best of our knowledge, its results are way ahead from other related works.
Regardless of the compression chosen, it is unclear how one can use it for PC registration.
In this paper, we provide a direct usage, by proposing an algorithm for point cloud registration that combines compactness and precision of SDF-based registration.

\section{Point Cloud Registration}

\subsection{Point Cloud to Point Cloud Registration (PC2PC)}

Let us have two 3D point clouds of $I$ points:  $p_i, p_i' \in \mathbb{R}^3, i=1 \ldots I$. We assume that there is a rigid body transformation $T \in SE(3)$, consisting of a 3D rotation $R \in SO(3)$ and translation by a vector $t \in \mathbb{R}^3$, acting on the entire point cloud: $ p_i' = R p_i + t. $

Alternatively, it can be written as:
\begin{equation*}
     \tilde{p}_i '= T \tilde{p}_i,
\end{equation*}
where $\tilde{p} = \left[ \begin{matrix} p \\ 1 \end{matrix} \right]$ is an homogeneous vector and $T=\left[ \begin{matrix} R & t \\ 0 & 1 \end{matrix} \right]$ is an element of the $SE(3)$ group. The rotation matrix $R \in SO(3)$ is considered to be parametrized by the axis–angle representation.

The problem of point cloud alignment can be formulated as a function to minimize:
\begin{equation}
    T^{*} = \text{arg}\,\min_{T} \sum_{i}^{I} ||T\cdot \tilde{p}_i - \tilde{p}'_i||^2 =  \text{arg}\,\min_{T} f(T),
    \label{eq:loss}
\end{equation}
assuming known correspondences between points.

Therefore, we want to find a minimum for $f()$ by minimizing with respect to $T$, which requires to optimize over the manifold $SE(3)$.
To this end, we can define a retraction \cite{absil2009} function as
\begin{equation}
R_T(\xi) = \exp( \xi^{\land} ) T = \mathrm{Exp}( \xi) T,
\end{equation}
where the exponent indicates the exponent map \cite{lynch2017} and $\xi \in \mathbb{R}^6$ are local coordinates of the Lie algebra of $SE(3)$ around the identity.

\begin{figure*}[h!] 
\includegraphics[width=.19\textwidth]{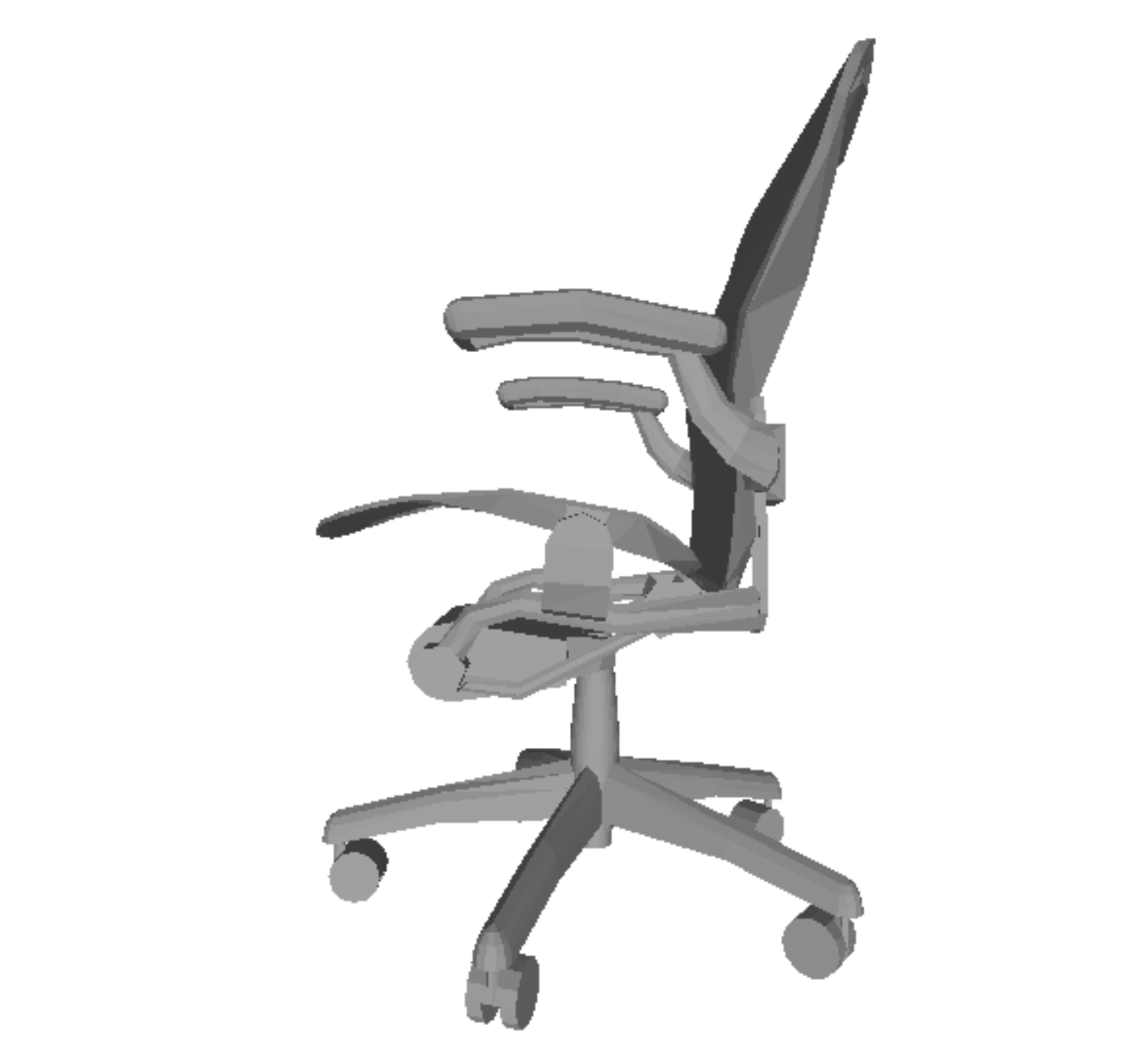} 
\includegraphics[width=.19\textwidth]{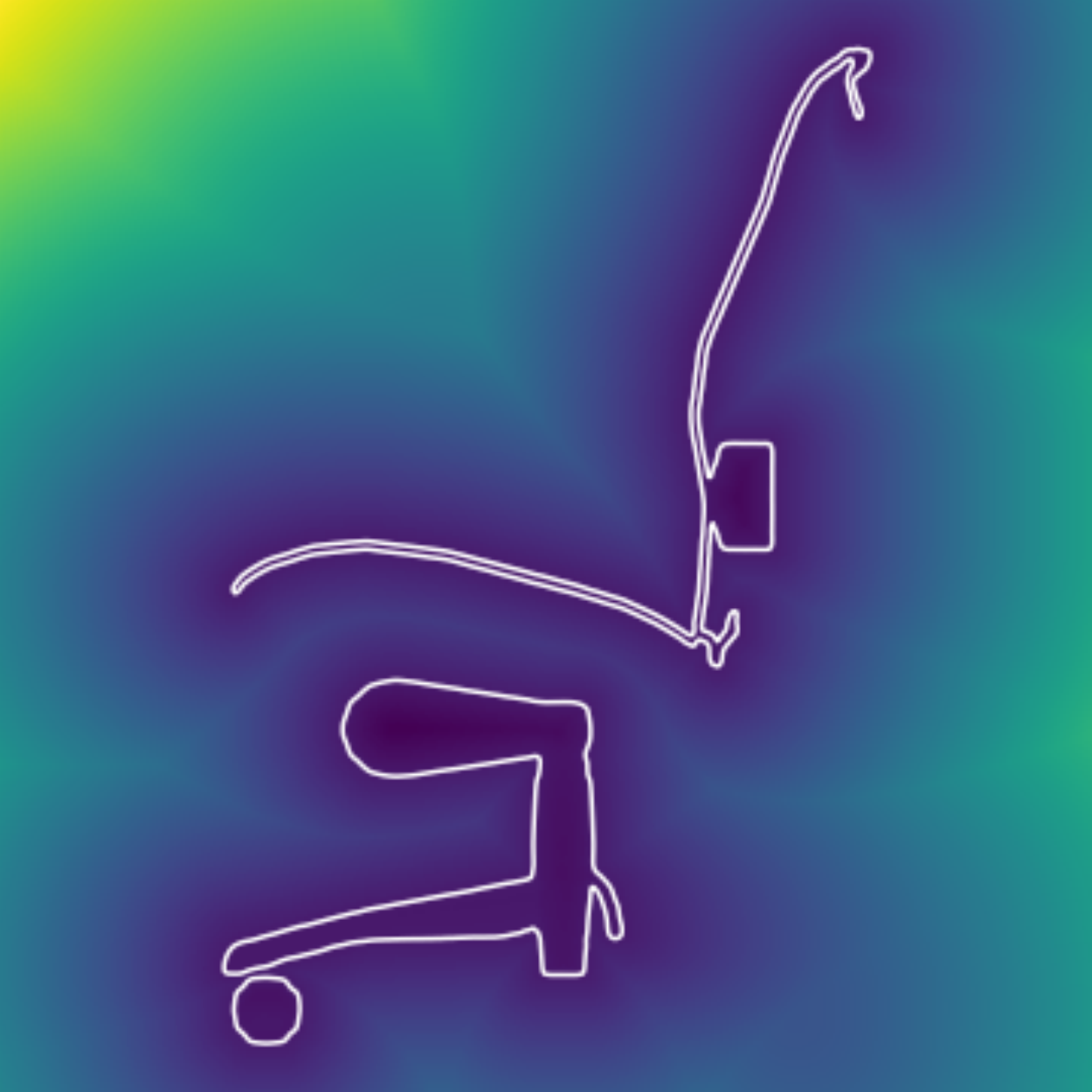} 
\includegraphics[width=.19\textwidth]{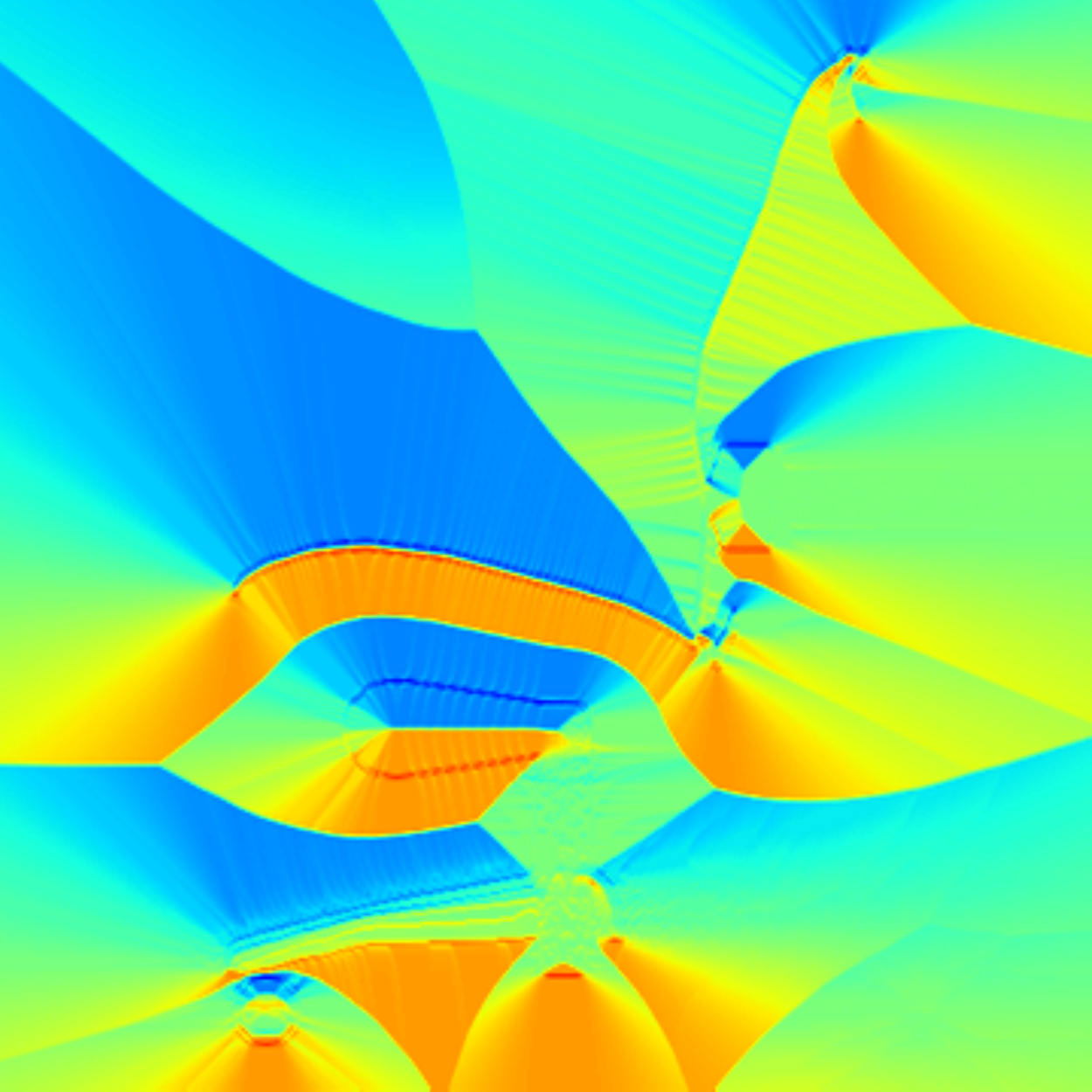}
\includegraphics[width=.19\textwidth]{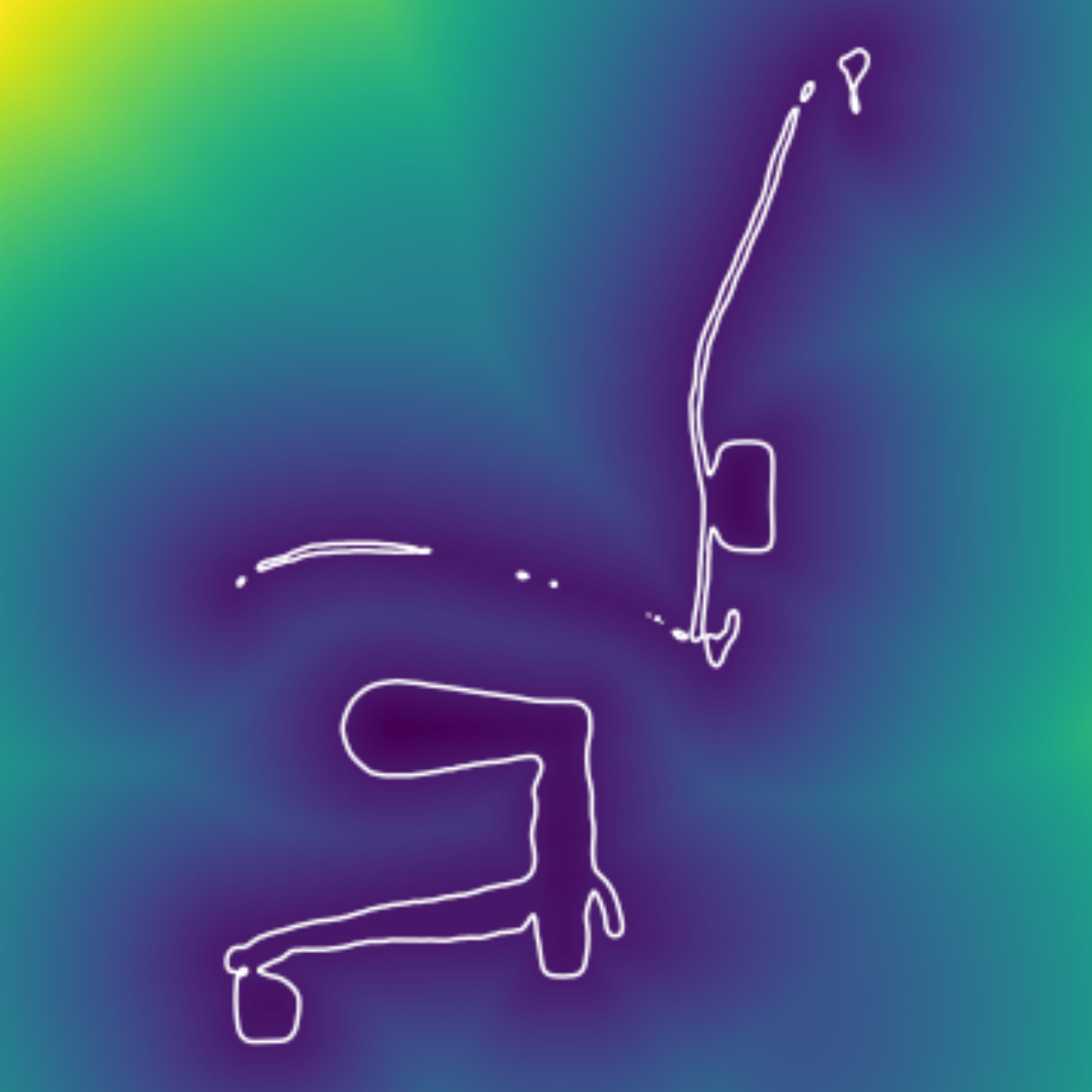}
\includegraphics[width=.19\textwidth]{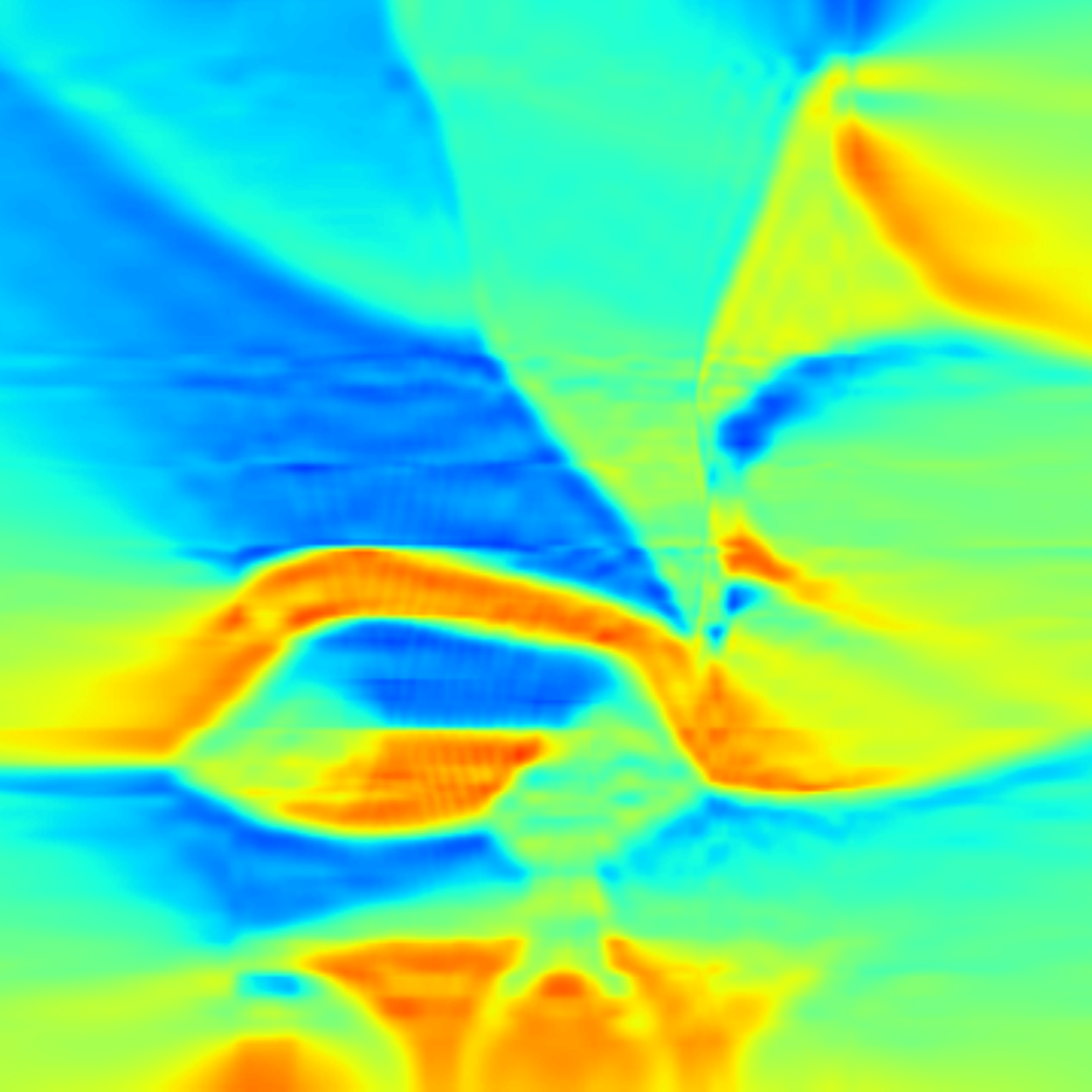} 
\caption{(a) Mesh model (b) True SDF, (c) its gradient $\nabla_x SDF(x)$,  (d) TT-SDF ($R$=20), (e) its gradient $\nabla_x SDF(x)$. Figures (b)-(e) are taken in the middle crossection of the chair.}
 \label{fig:grads}
\end{figure*}

As a result, one can do function composition $f_T = f \circ R_T : \mathbb{R}^6 \to \mathbb{R}$ now this function accepting standard calculation of directional derivatives, necessary to optimize in an iterative fashion:

\begin{equation}
    T^{*} = R_T(\xi^*) = R_T(-\alpha H_\xi^{-1}\nabla_\xi f_T(\xi)),
    \label{eq_update}
\end{equation}
which is a second order optimization method, i.e., involving the inverse of the Hessian $H_\xi$. An alternative we will not use here, but adopted by many others, is the first order update $\xi^* = -\alpha' \nabla_\xi f_T(\xi)$, the gradient descent. In the sections below we will formulate an alternative to cost function to the alignment problem, with a different cost function $f_T$, but entirely based on (\ref{eq_update}).

\subsection{SDF to Point Cloud Registration (SDF2PC)}

This variant of the problem requires an SDF and a point cloud. The goal is to {\em transform} the point cloud to the surface of the object, that is, the zero set level of the SDF such that $SDF(x)=0$, where $x\in \mathbb{R}^3$ is a point in the space. The MSE loss function from the \eqref{eq:loss} and \eqref{eq_update} becomes

\begin{equation}
    \min\limits_\xi  f_T(\xi) = \min\limits_\xi  \sum\limits_{i}^I ||SDF\left( T \cdot \tilde p_i  \right)||^2.
\end{equation}

The solution to the least squares problem above requires the calculation of the Jacobian:

\begin{align}
    J_i (x) & = \nabla_\xi SDF(x) = \nabla_x SDF(x) \cdot \frac{\partial x}{\partial \xi} =\nonumber \\
    & = \nabla_x SDF(x) \cdot  \left( -x^{\land} | I_{3 \times 3} \right)
\end{align}

\begin{equation}
    \nabla_\xi f_T(\xi) = \sum\limits_i^I J_i([T \cdot \tilde p_i]) \cdot SDF([T \cdot \tilde p_i]).
\end{equation}

The operator $[\cdot]$ indicates a rounding of the transformed point to the nearest voxel centroid in the SDF. We will discuss on Sec. \ref{sec:exp} the effect on the accuracy achieved.

The Hessian from the equation \eqref{eq_update} at every iteration is computed  following the Gauss-Newton Hessian approximation:
\begin{equation}
    H_\xi \approx \sum\limits_i^I J_i(p_i) J_i^{\top}(p_i).
\end{equation}

The optimization stops when, in two consecutive steps, the the update of the pose falls below a threshold.

\section{Tensor Train decomposition}

Low-rank tensor decompositions have been used for lossy compression of multi-dimensional data for almost a century \cite{first_lowrank}. There are multiple ways to decompose a tensor into its low-rank representation. In this paper we will focus on the Tensor Train decomposition due to its optimal trade-off between time-of-access and compression ratio for a given fixed error tolerance.

The decomposition is defined as follows. Let us consider a $d$-dimensional array $F(i_1,\ldots,i_d)$ with sizes of modes $N_1, \ldots ,N_d$. Then the following representation of this array is called a Tensor Train:
\begin{align}
\label{eq:tt}
\begin{split}
F(i_1,i_2,\ldots,i_d) = 
\sum\limits_{m_1,m_2,\ldots,m_{d-1}}^{r_1,r_2,\ldots,r_{d-1}} G^{(1)} (i_1, m_1) \cdot \\ \cdot G^{(2)}(m_1,i_2, m_2) \cdot \ldots \cdot G^{(d)} (m_{d-1},i_d),
\end{split}
\end{align}
where arrays $G^{(k)}$ are called TT-cores, and numbers $r_k$ are called TT-ranks. In case of $d=3$ this decomposition also coincides with the TUCKER2 decomposition \cite{Kolda09}.

As can be observed directly from the formula \eqref{eq:tt}, the decomposition is not unique. For a valid low-rank TT decomposition the cores $G^{(k)}$ can be obtained using multiple algorithms, based either on linear-algebraic principles \cite{tucker, tt, ttcross}, gradient-based optimization \cite{ttals, novikov2015}, or both of those combined \cite{hooi}.

A robust, CPU-based and quasi-optimal algorithm for obtaining a TT from a known array is the TT-SVD algorithm \cite{tt}. Its key idea is to sequentially reshape the multidimensional array into rectangular matrices and decoupling the first matrix index from the second one using truncated SVD.

In the 3D case the total amount of memory required by TT is $N_1 r_1 + r_1 N_2 r_2 + N_3 r_2 = O(N R^2)$ instead of $O(N^3)$ in the full uncompressed tensor, and the compression ratio (uncompressed / compressed) is $O(\frac{N^2}{R^2})$. Therefore results are the best if the 3D data is of high resolution. In this paper we denote $R=\max(r_i)$ and $N=\max(N_i)$. The computational complexity of accessing one element of the tensor has the computational complexity of $O(R^2)$.

The full list of allowable mathematical operations on Tensor Trains and their complexities can be found in papers \cite{tt, Kolda09}.

\subsection{Functional approximation and partial derivatives in TT}
We consider the voxel SDF data compressed by a Tensor Train as a discrete functional approximation inside a 3D box $[-1,1] \times [-1,1] \times [-1,1]$, with piece-wise constant basis functions that correspond to a nearest voxel approximation. The extension to a more general set of tensor networks can be seen in \cite{tn_finite} and to differentiable basis functions can be seen in \cite{Gorodetsky2019}.

In this paper, we solve the point cloud alignment through second order gradient-based optimization on the manifold. For this, we need to have access not only to the Signed Distance Function itself at any point, but also to its partial derivatives with respect to spatial coordinates: $\nabla_xSDF(x)$. Even though formally it is defined on a discrete coordinate indices, as soon as the function is correctly represented on its discretization grid (centers of voxels), such approximation is fairly precise and has a known closed-form error bounds for partial derivatives. In this paper we will use the central difference method that has the error bound of $O(h_x^2)$ if the discretization step is $h_x$.

To compute partial derivatives for a 3D function stored in Tensor Train let us start by considering a one-dimensional function $f(x) \in C^1$. Let us project it on a regular grid with $N$ nodes: $\{x_k | x_k = x_0 + k \cdot h_x, k=0 \ldots N-1 \}$. We denote $f_k = f(x_k)$. The derivative at every non-boundary point of the grid is approximated as: 
\begin{equation}
\frac{\partial f(x)}{\partial x}\big|_{x=x_k} = \frac{f_{k+1} - f_{k-1}}{2h_x} + o(h_x^2).     
\end{equation}

For the first and the last points in the domain, the formula will be different:

$$\frac{\partial f(x)}{\partial x}\big|_{x=x_0} =  -\frac{3}{2h_x} f_0 + \frac{2}{h_x} f_1 - \frac{1}{2h_x} f_2 + o(h_x^2)$$

$$\frac{\partial f(x)}{\partial x}\big|_{x=x_{N-1}} = -\frac{1}{2h_x} f_{N-3} + \frac{2}{h_x} f_{N-2} -\frac{3}{2h_x} f_{N-1} + o(h_x^2).$$

Derivatives at all points in the grid can be computed as 
\begin{equation}
    \frac{\partial f(x)}{\partial x} \big|_{x=x_{1} \ldots x_{N-1}} = D_N f
\end{equation}
where
\begin{equation}
\label{eq:finitediff}
D_N = \frac{1}{h} \left[ 
\begin{matrix}
- 3/2 & 2 & -{1}/{2} & 0 &  \ldots & 0 \\[4pt]
{1}/{2} & 0 & -{1}/{2} & 0 & \ldots & 0 \\[4pt]
0 & {1}/{2} & 0 & -{1}/{2} & 0 \ldots & 0 \\[4pt]
\vdots \\[4pt]
0 & \ldots 0 &  {1}/{2} & 0 & -{1}/{2} & 0 \\[4pt]
0 & \ldots & 0 &  {1}/{2} & 0 & -{1}/{2} \\[4pt]
0 & \ldots & 0 & -{1}/{2} & 2 & -{3}/{2}  \\[4pt]
\end{matrix} 
\right]
\end{equation}
- a sparse matrix and $f=\left[ \begin{matrix}
f_0 & \ldots & f_{N+1}
\end{matrix} \right]^{\top}$.

Tensor decompositions are naturally compatible with finite difference operators of any order of differentiation and precision. 

\begin{proposition}
A partial derivative over $k$-th coordinate of the multi-dimensional function on a grid represented as TT can be found by applying a  (smaller) finite-difference operator to the $k$-th core over hanging coordinate index. This operation takes only $O(N_k R^2)$ operations and requires additional $O(N_k R^2)$ memory for storing.
\end{proposition}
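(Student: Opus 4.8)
The plan is to show that finite-difference differentiation is a linear, single-mode operation that, thanks to the TT structure, touches only one core. First I would observe that the discretized partial derivative over the $k$-th coordinate is exactly the mode-$k$ contraction of the full grid tensor $F$ with the finite-difference matrix $D_{N_k}$ of \eqref{eq:finitediff}, i.e.
\[
\left(\frac{\partial F}{\partial x_k}\right)(i_1,\ldots,i_k',\ldots,i_d) = \sum_{i_k} D_{N_k}(i_k',i_k)\, F(i_1,\ldots,i_k,\ldots,i_d),
\]
which simply says that applying the per-axis stencil along every $i_k$-fiber of the grid tensor reproduces the derivative tensor.

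Second, I would substitute the TT expansion \eqref{eq:tt} for $F$ and push the sum over $i_k$ inside. The physical index $i_k$ appears in one and only one core, namely $G^{(k)}(m_{k-1},i_k,m_k)$, so every other factor is constant with respect to $i_k$ and leaves the summation untouched. The sum over $i_k$ therefore acts solely on the $k$-th core, yielding a new core
\[
\tilde G^{(k)}(m_{k-1},i_k',m_k) = \sum_{i_k} D_{N_k}(i_k',i_k)\, G^{(k)}(m_{k-1},i_k,m_k),
\]
while the ranks $r_{k-1},r_k$ and all remaining cores are left unchanged. Hence the derivative is again a TT of the same ranks, differing from $F$ only in its $k$-th core --- precisely the claimed ``apply a (smaller) finite-difference operator over the hanging index'' statement.

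Third comes the complexity count, which I expect to require the most care. The contraction above is, for each of the $r_{k-1}r_k = O(R^2)$ fixed rank-index pairs $(m_{k-1},m_k)$, a single matrix--vector product $D_{N_k}\,g$ with $g\in\mathbb{R}^{N_k}$. The key observation is that $D_{N_k}$ is sparse: as \eqref{eq:finitediff} shows, it has at most three nonzeros per row and hence only $O(N_k)$ nonzero entries in total, so each such product costs $O(N_k)$ rather than $O(N_k^2)$. Multiplying by the $O(R^2)$ pairs gives the stated $O(N_k R^2)$ arithmetic operations, and since the output is a single core of shape $r_{k-1}\times N_k\times r_k$, the extra storage is exactly $O(N_k R^2)$.

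The main obstacle is conceptual rather than technical: one must justify interchanging the finite-difference summation with the rank summations of the TT, which is immediate from linearity together with the fact that $D_{N_k}$ acts on a single mode. The only subtlety worth checking is that the one-sided boundary rows of $D_{N_k}$ at $x_0$ and $x_{N-1}$ do not spoil sparsity; since each contributes just three entries, the $O(N_k)$ nonzero bound --- and therefore the whole complexity estimate --- is preserved.
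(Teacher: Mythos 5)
Your proposal is correct and follows essentially the same route as the paper: both express the discrete derivative as the single-mode action of the sparse operator $D_{N_k}$ (the paper writes it as the Kronecker factor $I_1 \otimes \ldots \otimes D_{N_k} \otimes \ldots \otimes I_d$, you as a mode-$k$ contraction), observe that by linearity it absorbs into the $k$-th core alone, and count $O(N_k)$ per sparse matrix--vector product times $O(R^2)$ rank-index pairs for the cost and one modified core of size $r_{k-1} N_k r_k$ for the memory. Your explicit remarks on interchanging the sums and on the boundary rows of $D_{N_k}$ are minor elaborations of what the paper leaves implicit, not a different argument.
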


\begin{proof}
{\rm
Let us denote $F(i_1,i_2,...,i_d)$ represented in the TT format (\ref{eq:tt}) as $F$. Then the correctness of the result may be shown using the following relation:

\begin{equation}
\begin{split}
\frac{\partial F}{ \partial x_k } \approx \hat D_k F = \left( I_1 \otimes \ldots \otimes \ D_{N_k}  \otimes \ldots \otimes I_d \right) F =\\ 
= \sum\limits_{m_1, m_2,\ldots,m_{d-1}}^{r_1,r_2,\ldots,r_{d-1}} G^{(1)} (i_1, m_1) \cdot \ldots \\
\cdot \left( D_{N_k} G^{(k)}(m_{k}, i_k, m_{k+1}) \right) \cdot \ldots \cdot G^{(d)} (m_{d-1},i_d).
\end{split}
\end{equation}

Here we denote $I_m$ - identity matrix of size $N_m \times N_m$,  $D_{N_k}$ - a sparse $N_k \times N_k$ of the one-dimensional finite difference operator as defined in  \eqref{eq:finitediff},
$\hat D_k$ - finite difference operator that acts on the entire 
$d$-dimensional grid and theoretically it is $N \times N$ matrix.

The complexity of a matrix-vector product with the sparse finite-difference operator is $O(N)$, but the contraction $D_{ij} G_{mjk}$ we have to do it for all $R^2$ combinations of free indices $m$ and $k$, thus giving  $O(N R^2)$.

To store the tensor with derivatives we would only need to store one core that has changed, and a TT core has $N R^2$ elements in the worst case.
}
\end{proof}

\begin{figure*}[h!] 
\centering
\includegraphics[width=.98\textwidth]{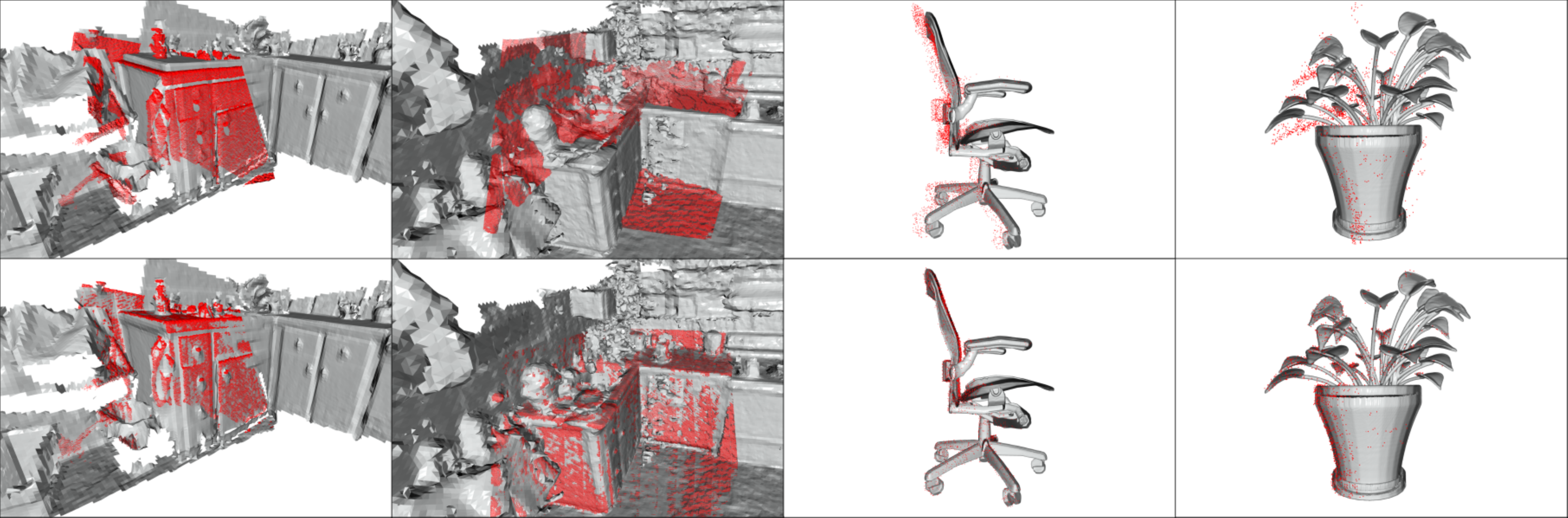}
\caption{Qualitative results of the TT-SDF2PC. On the top row, initial perturbed conditions of the PC w.r.t. the SDF. On the bottom, the registration results using TT-SDF2PC. On the left corresponds to 3DMatch~\cite{zeng20173dmatch} scenes ({\fontfamily{qcr}\selectfont analysis-by-synthesis-apt1-kitchen}, {\fontfamily{qcr}\selectfont bundlefusion-apt0}). On the right, 2 scenes from ModelNet \cite{modelnet} (chair, flower).}
\label{fig:qualitative}
\end{figure*}

\section{TT-Compression of Signed Distance functions}
\label{section:tttsdf}
Let us consider a closed 3D model $\Omega$ with surface $\partial\Omega.$ For any 3D point $x \in \mathbb{R}^3$ Signed Distance Function (Field) is defined as:
\begin{equation}
    SDF(x) = \left\{\begin{array}{rl}
        dist(x, \partial\Omega), &\textrm{if outside of object } \Omega \\
        -dist(x, \partial\Omega), &\textrm{otherwise.} \\
    \end{array}\right.
\end{equation}
Truncated SDF is defined as:
\begin{equation}
    TSDF(x) = \left\{\begin{array}{rl}
        \mu, &\textrm{if } \mu \leq SDF(x) \\
        SDF(x), &\textrm{if } -\mu < SDF(x) < \mu \\
        -\mu, &\textrm{if } SDF(x) \leq -\mu. \\
    \end{array}\right.
\end{equation}

Observations on behaviour of the SDF under low-rank compression were done in the TT-TSDF paper \cite{tttsdf}. Authors have demonstrated that TT provides $100\times$ - $1000\times$ compression ratio for volumetric TSDF scenes while keeping the metrics of error of the reconstructed surface at the order of $10^{-3}$ in Chamfer distance, which corresponds to barely visible distortion of small features of the surface. The resulting size of the representation is around 1-3MB for a high-resolution $512^3$ scene and 500-1500kB for a $256^3$ scene. It was shown that under TT compression TSDF faithfully keeps information in the close proximity of the surface of the object (zero isolevel). On the contrary, compressing the untruncated SDF preserves good functional approximation far from the surface of the object, however it was shown that the compression distorts or even completely ruins the surface information (see Fig. \ref{fig:grads}d).

A counter-intuitive result that we are going to show in sections below, is that even under strong compression (see Fig. \ref{fig:grads}e) the quality of gradients obtained through finite-difference approximation is still more than sufficient to provide very precise point-to-implicit registration.

\begin{figure*}[h!] 
\centering
\includegraphics[width=.98\textwidth]{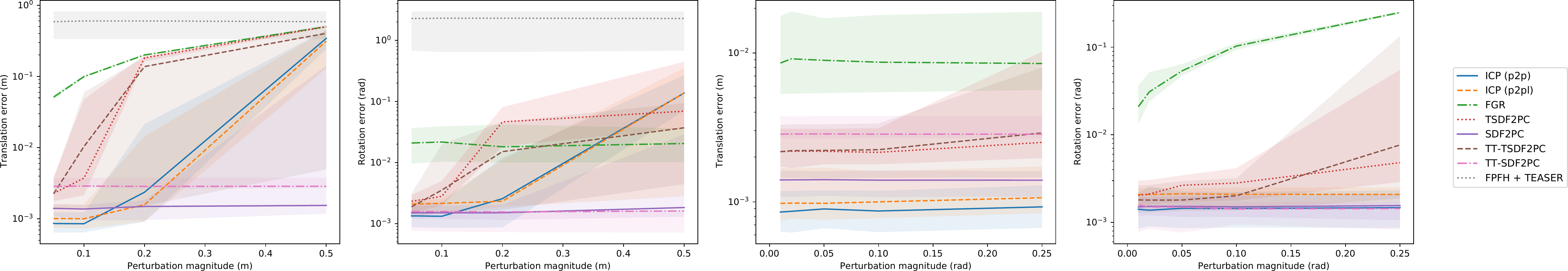}
\caption{Evaluation results of TT-SDF/TT-TSDF registration on ModelNet dataset. Reports median/Q1/Q3 values for both rotation and translation perturbations of different magnitudes.}
\label{fig:modelnet-res}
\end{figure*}

\begin{figure*}[h!] 
\centering
\includegraphics[width=.98\textwidth]{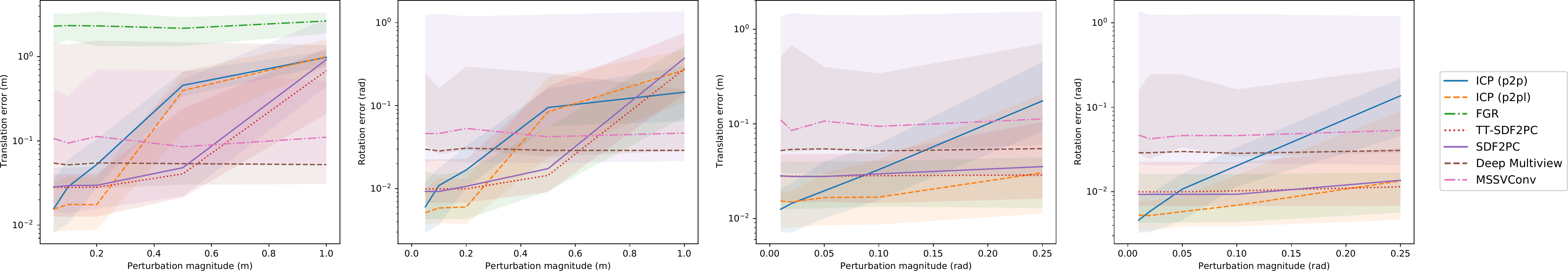}
\caption{Evaluation results of TT-SDF/TT-TSDF registration on 3DMatch dataset. Reports median/Q1/Q3 values for both rotation and translation perturbations of different magnitudes.}
\label{fig:3dmatch-res}
\end{figure*}

\section{Experiments}

\label{sec:exp}
In this section, we provide a comparison of the proposed registration method on compressed map TT-SDF2PC with respect to other popular registration methods from the accuracy and map memory consumption points of view. The study is performed on both synthetic and real datasets.

\subsection{Experimental setup}

All PC2PC methods are computed on a commodity Intel i7-7800X CPU, 64GB of RAM, and the deep learning based models and all SDF-based methods are inferred and computed on a GeForce GTX 1080 Ti 12Gb GPU.

\textbf{Data.} For evaluation, we consider two datasets: ModelNet \cite{modelnet}~--- object-centric dataset with 3D models of different objects and 3DMatch reconstruction dataset \cite{zeng20173dmatch}~--- meta-dataset of real RGBD data aggregated from SUN3D~\cite{xiao2013sun3d}, 7-scenes~\cite{shotton20137scenes}, BundleFusion~\cite{dai2017bundlefusion}, Analysis by Synthesis~\cite{valentin2016learning}, RGB-D Scenes v2 datasets~\cite{lai2014unsupervised}. The choice of ModelNet is motivated by its frequent usage in reporting new registration methods. 3DMatch is chosen as a wide collection of real depth data. In comparison to other raw 3D data, 3DMatch is well supported by tools for SDF/TSDF generation~\cite{whelan2015elasticfusion, KinectFusion2011, vizzo2022vdbfusion}. 

Every evaluation configuration has \textit{map} in two forms~--- point cloud and TSDF/SDF, a set of \textit{observations} to be registered w.r.t. this map. From ModelNet, 6 different objects are sampled, point cloud map and TSDF/SDFs are generated using object mesh. For each model, 5  different synthetic depth observations are produced. For 3DMatch, we pick one scene from each subdataset. Point cloud map for every scene is aggregated using every 10th point cloud from the sequence and further downsampled with voxel size \SI{0.05}{\metre}. TSDF map is obtained by using the VDBFusion~\cite{vizzo2022vdbfusion} algorithm with integration of every 10th frame into it and voxel size \SI{0.02}{\metre}. SDF is obtained by extracting isocontours from TSDF as occupancy grid and applying Euclidean distance transform to this occupancy grid.

In order to emulate the localization problem, we perturb every observation w.r.t its ground truth  pose in the map. The following sweep of perturbation magnitudes is considered: translation~--- [0.05, 0.1, 0.2, 0.5, 1.0] \SI{}{\metre}, rotation~--- [0.01, 0.02, 0.05, 0.1, 0.25] rad. For every perturbation magnitude, 5 directions are uniformly sampled. 

\textbf{Metrics.}
Our evaluation considers the following algorithm properties: registration accuracy and memory used to store the map. For accuracy, standard metrics over $SE(3)$ between ground truth and estimated transformation are taken, separately for rotation and translation parts. Memory for map storage is calculated according to the type of registration method: sparse point cloud map with $N$ points is presented as $N \times 3$ floats, SDF/TSDF~--- $W \times H \times L$ floats, where W, H, L are volume dimensions, TT-SDF~--- $W\times R + H \times R^2 + L \times R$ where $R$ is TT-rank, algorithm that operates on feature maps~--- $K \times D$ floats, where $K$~--- number of features in the map, $D$~--- map dimensionality.

\textbf{Methods.}
Our proposed registration methods on compressed map form are TT-SDF2PC and TT-TSDF (its truncated version). We consider the following methods to compare with. Among local registration methods, the classical ICP family (point2point, point2plane from the Open3D \cite{Zhou2018} implementation) and pc2SDF \cite{sdf2sdf2} algorithms are chosen. Among the global registration methods, we consider  Fast Global Registration (FGR)~\cite{zhou2018FGR}, TEASER with FPFH features~\cite{yang2020teaser}, and SOTA deep-learning features-based methods: MSSVConv~\cite{horache2021mssvconv} and Deep  Multiview~\cite{gojcic2020minkowsky}. The two last methods are used only on 3DMatch data using their original pre-trained versions for this dataset. For both of them we have chosen an  amount of features equal to 5000 because less amount produces less stable results. 

\subsection{Registration results}

The registration quality of the considered methods on ModelNet and 3DMatch datasets is depicted on Fig.~\ref{fig:modelnet-res} and \ref{fig:3dmatch-res} respectively. We do not include failed experiments in plots of average translation and rotation error metrics (i.e., TEASER on ModelNet and FGR on 3DMatch), since errors in non-converged cases can be arbitrarily large without carrying any significant information. 

In general, our solution TT-SDF2PC shows performance comparable on the order of error with {\em local} methods on small magnitudes and overcomes them on medium and large magnitudes. Such behaviour could be explained by the redundancy of data in SDF, where points far away from the zero level-set still provide a valid gradient towards the closest surface. Global registration methods, as  expected, are stable with respect to different types of perturbations but their overall performance presents a higher order error with respect to local methods.

SDF-variant of both classical and compressed registration methods outperforms TSDF-variant because the truncation regions or constant value lead to zero gradients. Originally, it was expected that the TT-SDF2PC variant would underperform the classical SDF2PC because of the information loss during compression. This hypothesis is supported by experiments on synthetic dataset, whereas on real data the behaviour is the opposite~--- the compressed form shows better performance than the raw SDF. This could be explained by the averaging effect of compression leading to an effective filtering effect of the noise in observations. 

Qualitative results on registration are presented in Fig.~\ref{fig:qualitative}. It could be noticed that even though TT-SDF2PC, with the nearest-voxel approximation of SDF, shows less minimal accuracy in comparison to other local registration, the  visual quality of alignment stays on high level and could be used for further applications in localization, registration and 3D reconstruction.  

\subsection{Memory Consumption of the methods}

Using metrics on measuring memory consumption, we provide an analysis for the maps used in 3DMatch datasets, whose statistics are presented in Table~\ref{table:stat}. Sparse point cloud map requires the least amount of data to be stored w.r.t. other approaches. Straightforward application of learnable feature-based approaches for global registration are not effective for being compressed form~--- less amount of features or their dimensionality leads to less and not-stable performance. TT-SDF representation decreases the amount of stored data in 10-200 times less than SDF depending on size of the scene. This achieves an order of compression of sparse point cloud on huge scenes (i.e., SUN3D), giving more stable registration accuracy on different size of magnitudes and providing highly-detailed surface information.

\begin{table*}[h!]
\caption{Memory requirements for map representation for different 3DMatch scenes. It contains the total number of points in the aggregated sparse point cloud map, the dimensions of the TSDF produced by VDBFusion~\cite{vizzo2022vdbfusion}. We also report the number of Bytes in memory for each map representation: PC, SDF, TT-SDF and feature-based approaches MMSVConv (5k) and Deep Multiview (5k).
}
\begin{center}
\begin{tabular}{l c c | c c c c c}
\toprule
Scene &   PC-map points  &  SDF dimensions & PC & SDF & TT-SDF  &  MSSVConv5k & Deep MV5k \\
\midrule
\text{red\_kitchen} & 26 757 & (346, 153, 152) & 320kB & 32.2MB & 3.1MB & 640kB & 640kB\\
\text{analysis-by-synthesis-apt1-kitchen} & 11 193 & (184, 210, 116) & 132kB & 17.9MB & 2.4MB & 640kB & 640kB \\
\text{bundlefusion-apt0} & 169 752 & (321, 295, 301) & 2.0MB & 114.0MB & 5.9MB & 640kB & 640kB\\
\text{rgbd-scenes-v2-scene\_01} & 23 690 & (265, 136, 265) & 284kB & 38.2MB & 2.8MB & 640kB & 640kB\\
\text{sun3d\_at-home\_at\_scan1\_2013\_jan\_1} & 407 266 & (736, 309, 934) & 4.9MB & 849.7MB & 6.5MB & 640kB & 640kB \\
\bottomrule
\end{tabular}
\end{center}
\label{table:stat}
\end{table*}

\subsection{Dependence on TT-ranks}
There is a single hyperparameter in the TT-based approach and it is the maximum TT-rank $R= \max(R_1, R_2)$. It is clear from the general fact about SVD that making the truncation rank smaller would remove high-frequency details from the data. 

To illustrate how the compression rank affects the quality of SDF we took the Stanford Armadillo model with a fixed starting pose perturbation and fixed simulated depth frame and performed a sweep on ranks (see Fig. \ref{fig:ranks}). As one can see, the registration quality almost saturate starting with $R=10$, which means that for the most part only large-scale geometry features of the SDF matter for pose optimisation.

\begin{figure}[h!] 
\centering
\includegraphics[width=.4\textwidth]{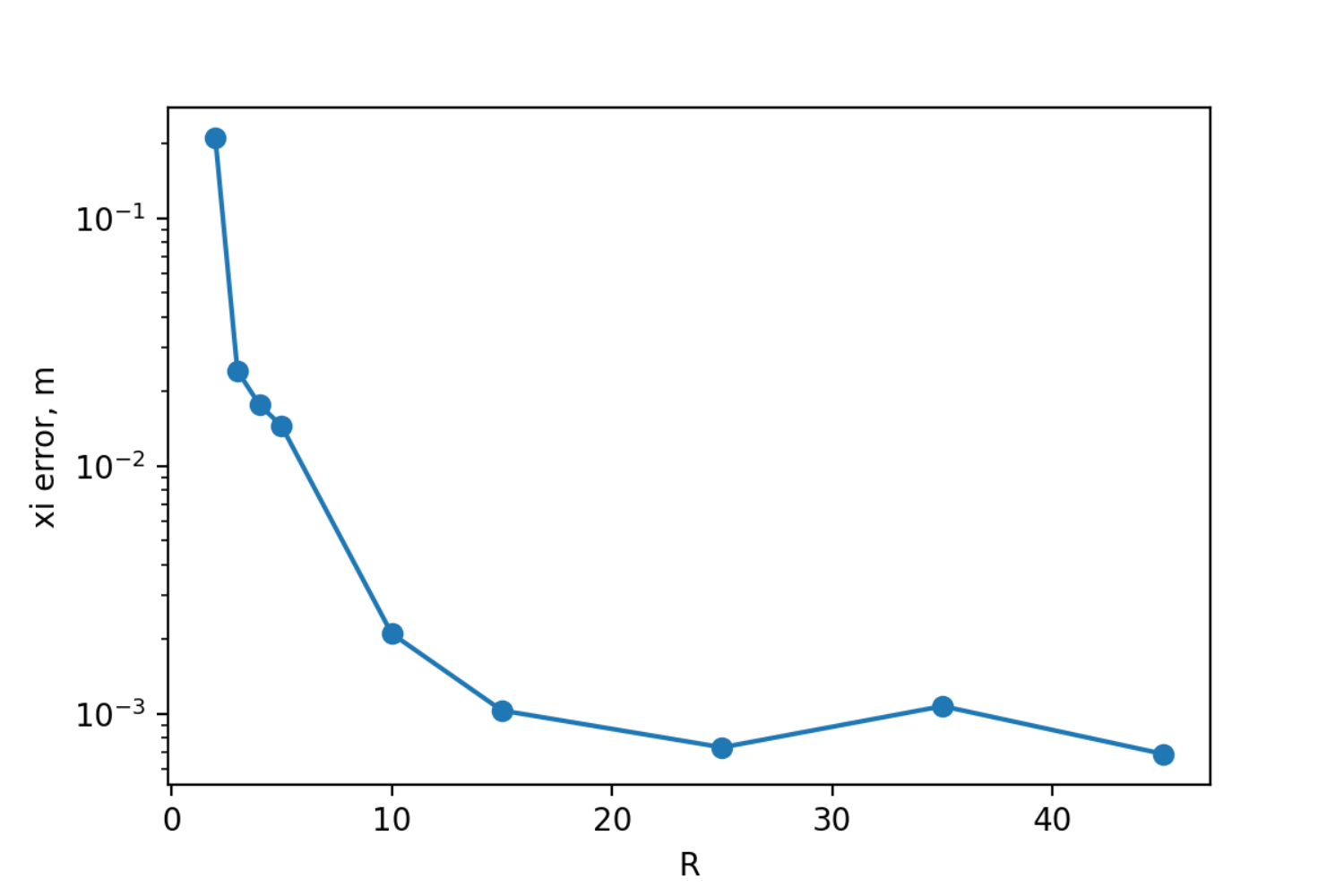} 
\includegraphics[width=.4\textwidth]{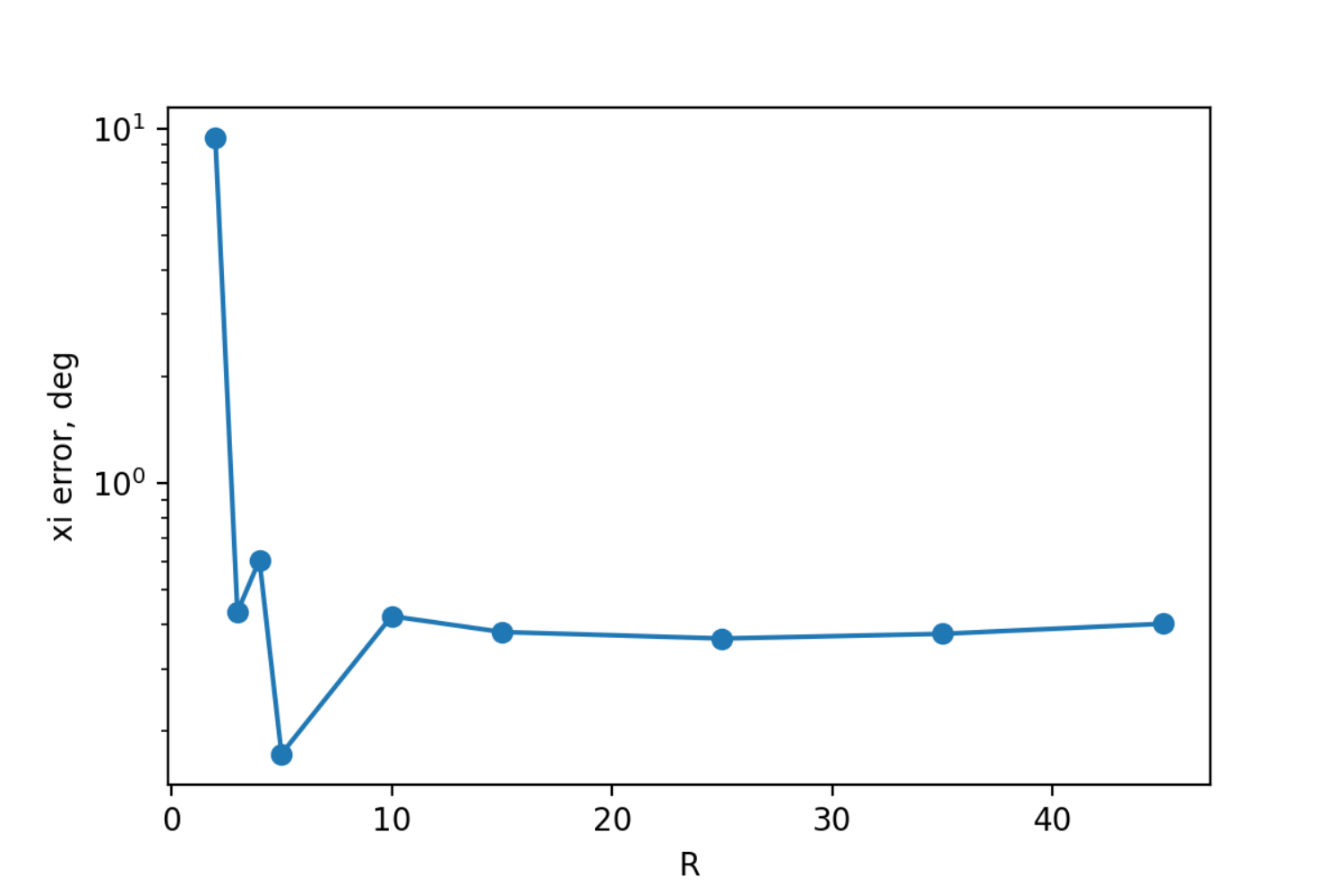}
\caption{The pose error in translation (up) and rotation (down), w.r.t the rank chosen for compression. }
\label{fig:ranks}
\end{figure}

\section{Conclusions}
Our method TT-SDF2PC for point cloud to compressed SDF registration provides up to millimeter accuracy on par with {\em local} registration methods and SDF-based methods in addition to a low memory footprint of a few megabytes. {\em Global} registration methods show better accuracy for larger perturbations, as expected, since our method is a local method.
This fulfills our two initial objectives, compression and the success of the registration task.

We have shown the potential of directly aligning PC in the tensor compressed domain on two different benchmarks. 
On the synthetic dataset (ModelNet) the method performs on ideal conditions, since the models are highly detailed and closed. 
On the real dataset, it is clear that there exists a gap between all the elements in the pipeline, observations, fusion, compression and registration; and yet, the results obtained bring support on the usage of TT-SDF2PC.

\end{document}